\def\eqref#1{equation~\ref{#1}}
\def\floor#1{\lfloor #1 \rfloor}
\def\1{\mathbbm{1}}
\def\vtheta{{\bm{\theta}}}
\def\vc{{\bm{c}}}
\def\vd{{\bm{d}}}
\def\ve{{\bm{e}}}
\def\vf{{\bm{f}}}
\def\vg{{\bm{g}}}
\def\vh{{\bm{h}}}
\def\vu{{\bm{u}}}
\def\vv{{\bm{v}}}
\def\vx{{\bm{x}}}
\def\vtheta{{\bm \theta}}
\def\vphi{{\bm \phi}}
\def\mA{{\bm{A}}}
\def\mI{{\bm{I}}}
\def\mK{{\bm{K}}}
\def\mO{{\bm{O}}}
\def\mU{{\bm{U}}}
\def\mV{{\bm{V}}}
\def\mZ{{\bm{Z}}}
\DeclareMathAlphabet{\mathsfit}{\encodingdefault}{\sfdefault}{m}{sl}
\SetMathAlphabet{\mathsfit}{bold}{\encodingdefault}{\sfdefault}{bx}{n}
\def\gD{\mathcal{D}}
\def\gI{\mathcal{I}}
\def\gK{\mathcal{K}}
\newcommand{\R}{\mathbb{R}}
\newcommand{\N}{\mathbb{N}}
\newcommand{\Z}{\mathbb{Z}}
\DeclareMathOperator*{\argmin}{argmin}
\DeclareRobustCommand\onedot{\futurelet\@let@token\@onedot}
\def\@onedot{\ifx\@let@token.\else.\null\fi\xspace}
\def\eg{\emph{e.g}\onedot} 
\def\ie{\emph{i.e}\onedot} 
\def\cf{\emph{c.f}\onedot}
\providecommand{\abs}[1]{\lvert#1\rvert}
\providecommand{\norm}[1]{\lVert#1\rVert}
\definecolor{Blue}{RGB}{0,90,255}
\definecolor{Red}{RGB}{255,75,0}
\definecolor{Green}{RGB}{3,175,122}
\definecolor{WBlue}{RGB}{191,228,255}
\definecolor{WRed}{RGB}{255,202,191}
\definecolor{WGreen}{RGB}{119,217,168}
\newcommand{\standd}{\mathrm{d}}
\theoremstyle{plain}
\newtheorem{theorem}{Theorem}[section]
\theoremstyle{definition}
\theoremstyle{remark}
\newtheorem{remark}[theorem]{Remark}
\title{Glocal Hypergradient Estimation with Koopman Operator}
\author{Ryuichiro Hataya\\
RIKEN AIP
\and
Yoshinobu Kawahara\\
Osaka University \& RIKEN AIP
}
\date{}
\begin{document}
\maketitle

\begin{abstract}
Gradient-based hyperparameter optimization methods update hyperparameters using hypergradients, gradients of a meta criterion with respect to hyperparameters.
Previous research used two distinct update strategies: optimizing hyperparameters using global hypergradients obtained after completing model training or local hypergradients derived after every few model updates.
While global hypergradients offer reliability, their computational cost is significant; conversely, local hypergradients provide speed but are often suboptimal.
In this paper, we propose \emph{glocal} hypergradient estimation, blending ``global'' quality with ``local'' efficiency.
To this end, we use the Koopman operator theory to linearize the dynamics of hypergradients so that the global hypergradients can be efficiently approximated only by using a trajectory of local hypergradients.
Consequently, we can optimize hyperparameters greedily using estimated global hypergradients, achieving both reliability and efficiency simultaneously.
Through numerical experiments of hyperparameter optimization, including optimization of optimizers, we demonstrate the effectiveness of the glocal hypergradient estimation.
\end{abstract}

\section{Introduction}

A bi-level optimization problem is a nested problem consisting of two problems for the model parameters $\vtheta\in\R^p$ and the meta-level parameters called hyperparameters $\vphi\in\R^q$ as
\begin{align}
    	                &\vphi^\ast\in\argmin_{\vphi}\tilde{\ell}(\vtheta^\ast(\vphi); \tilde{\gD}) \label{eq:outer}    \\ 
&	\text{such that}\quad \vtheta^\ast(\vphi)\in\argmin_{\vtheta}\ell(\vtheta, \vphi; \gD). \label{eq:inner}
\end{align}
Its inner-level problem (\cref{eq:inner}) is to minimize an inner objective $\ell(\vtheta, \vphi; \gD)$ with respect to $\vtheta$ on data $\gD$.
The outer-level or meta-level problem (\cref{eq:outer}) aims to minimize a meta objective $\tilde{\ell}(\vtheta, \vphi; \tilde{\gD})$ with respect to $\vphi$ on data $\tilde{\gD}$, and usually $\gD\cap\tilde{\gD}=\emptyset$.
A typical problem is hyperparameter optimization~\citep{hutter19}, where $\ell$ and $\tilde{\ell}$ are training and validation objectives, and $\gD$ and $\tilde{\gD}$ are training and validation datasets.
Another example is meta learning~\citep{hospedales2021meta}, where $\ell$ and $\tilde{\ell}$ correspond to meta-training and meta-testing objectives.
In the deep learning context, which is the main focus of this paper, $\vtheta$ corresponds to neural network parameters, which are optimized by gradient-based optimizers, such as SGD or Adam~\citep{Kingma2015}, using gradient $\nabla_\vtheta \ell(\vtheta, \vphi)$.

Similarly, gradient-based bi-level optimization aims to optimize the hyperparameters with gradient-based optimizers by using hypergradient $\nabla_\vphi \tilde{\ell}(\vtheta(\vphi))$~\citep{Bengio2000,Larsen1996}.
Although this hypergradient is not always available, when it can be obtained or estimated, gradient-based optimization is efficient and scalable to even millions of hyperparameters~\citep{lorraine20a}, surpassing the black-box counterparts scaling up to few hundreds~\citep{bergstra2011algorithms,bergstra2013making}.
As a result, gradient-based approaches are applied in practical problems that demand efficiency~\citep{choe2023making}, such as neural architecture search~\citep{liu2018darts,zhang21s,sakamoto2023atnas}, optimization of data augmentation~\citep{Hataya2022}, and balancing several loss terms~\citep{shu2019,li2021autobalance}.

Hypergradients in previous works can be grouped into two categories: global hypergradients and local hypergradients.
Gradient-based bi-level optimization with global hypergradients uses the gradient of the meta criterion value after the entire model training completes with respect to hyperparameters~\citep{Maclaurin2015,micaelli2021gradientbased,Domke2012}.
This optimization can find the best hyperparameters to minimize the final meta criterion, but it is computationally expensive because the entire training loop needs to be repeatedly executed.
Contrarily, gradient-based hyperparameter optimization with local hypergradient leverages hypergradients of the loss values after every few iterations of training and updates hyperparameters on-the-fly~\citep{franceschi17a,luketina2016scalable}.
This strategy can optimize model parameters and hyperparameters alternately and achieve efficient hyperparameter optimization, although the obtained hypergradients are often degenerated because of ``short-horizon bias''~\citep{wu2018understanding,micaelli2021gradientbased}.

In this paper, we propose \emph{glocal hypergradient} that leverages the advantages and avoids shortcomings of global and local hypergradients. 
This method estimates global hypergradients using a trajectory of local hypergradients, which are used to update hyperparameters greedily (see \cref{fig:schematic}).
This leap can be achieved by the Koopman operator theory~\citep{koopman1931hamiltonian,mezic2005spectral,brunton22_modernkoopman}, which linearizes a nonlinear dynamical system, to compute desired global hypergradients as the steady state from local information.
As a result, gradient-based bi-level optimization with glocal hypergradients can greedily optimize hyperparameters using approximated global hypergradients.
We verify its effectiveness in numerical experiments, namely, hyperparameter optimization of optimizers and data reweighting.

Our specific contributions are as follows:
\begin{enumerate}[leftmargin=*,topsep=0pt,partopsep=0pt,parsep=0pt]
	\item We propose the \emph{glocal} hypergradient estimation that combines the advantages of global and local hypergradient techniques.
    Specifically, it leverages the Koopman operator theory to predict the global hypergradients from local information, thereby retaining the efficiency of local methods while capturing global information.
    \item We provide a comprehensive theoretical analysis of the computational complexities involved in the glocal approach, revealing its computational efficiency compared to the global method. 
    Additionally, we present an error bound quantifying the accuracy of the proposed glocal estimation relative to the actual global hypergradient.
    \item We numerically demonstrate that the proposed glocal hypergradient estimation achieves performance comparable to global approach while maintaining the efficiency of the local method.
\end{enumerate}

The remaining text is organized as follows: \cref{sec:preliminary} explains gradient-based bi-level optimization and the Koopman operator theory with related work, \cref{sec:method} introduces the proposed glocal hypergradient estimation, \cref{sec:experiments} demonstrates its empirical validity with analysis, 
and \cref{sec:discussion_conclusion} concludes this work.

\begin{figure}[t]
\centering
\resizebox{\linewidth}{!}{%
\begin{tikzpicture}[]

	\def\hd{0.5}
	\def\hdd{1}
	\def\vd{0.7}
    \def\vdd{0.15}
	\tikzstyle{every node}=[font=\scriptsize,style=thick]
	
	\newcommand{\shortdots}{\cdot\hspace{-0.5ex}\cdot\hspace{-0.5ex}\cdot\hspace{-1ex}{}}
	
	\node (theta0) at (0,0) {$\vtheta_0$};
	\node[right=\hd of theta0] (theta1) {$\vtheta_1$};
	\node[right=\hdd of theta1] (theta_s) {$\vtheta_{(s-1)\tau}$};
	\node[right=\hd of theta_s] (theta_s1) {$\vtheta_{(s-1)\tau+1}$};
	\node[right=\hd of theta_s1] (theta_s2) {$\vtheta_{(s-1)\tau+2}$};
	\node[right=\hdd of theta_s2] (theta_ss) {$\vtheta_{s\tau\vphantom{(1)+}}$};
	\node[right=\hd of theta_ss] (theta_ss1) {$\vtheta_{s\tau+1}$};
	\node[right=\hdd of theta_ss1] (theta_T) {$\vtheta_{T}$};
	\node[right=\hdd of theta_T.center] {{\footnotesize Parameters}};
	
	\node[below=\vd of theta0] (phi0) {$\vphi_0$};
	\node[below=\vd of theta_s] (phi_s) {$\vphi_{s-1}$};
	\node[below=\vd of theta_ss] (phi_ss) {$\vphi_s$};
	\node[below=\vd of theta_ss1] (phi_ss1) {};
	\node[below=\vd of theta_T] (phi_S) {\vphantom{$\vphi_S$}};
	\node[right=\hdd of phi_S.center] {{\footnotesize Hyperparameters}};
	
	\node[above=\vd of theta1] (hg1) {$\vh_1$};
	\node[above=\vd of theta_s] (hg_s) {$\vh_{(s-1)\tau}$};
	\node[above=\vd of theta_s1] (hg_s1) {$\vh_{(s-1)\tau+1}$};
	\node[above=\vd of theta_s2] (hg_s2) {$\vh_{(s-1)\tau+2}$};
	\node[above=\vd of theta_ss] (hg_ss) {$\vh_{s\tau}$};
	\node[above=\vd of theta_ss1] (hg_ss1) {$\vh_{s\tau+1}$};
	\node[above=\vd of theta_T] (hg_T) {$\vh_T$};
	\node[right=\hdd of hg_T.center] {{\footnotesize Hypergradients}};
	\node[above=0.3 of hg_s.center] {{\footnotesize \emph{local} hypergradient}};
	\node[above=0.3 of hg_T.east] {{\footnotesize \emph{global} hypergradient}};
	
	\draw[-{latex}] (theta0) -- (theta1) node[midway,below=\vdd] (a0) {$\Theta(\vtheta, \vphi)$};
	\draw[dashed,-{latex}] (theta1) -- (theta_s) node[pos=0.3,below=\vdd] (a1) {$\Theta(\vtheta, \vphi)$};
	\draw[-{latex}] (theta_s) -- (theta_s1) node[midway,below=\vdd] (a_s) {$\Theta(\vtheta, \vphi)$};
	\draw[-{latex}] (theta_s1) -- (theta_s2) node[midway,below=\vdd] (a_s1) {$\Theta(\vtheta, \vphi)$};
	\draw[dashed,-{latex}] (theta_s2) -- (theta_ss) node[pos=0.3,below=\vdd] (a_s2) {$\Theta(\vtheta, \vphi)$};
	\draw[-{latex}] (theta_ss) -- (theta_ss1) node[midway,below=\vdd] (a_ss) {$\Theta(\vtheta, \vphi)$};
	\draw[dashed,-{latex}] (theta_ss1) -- (theta_T);
    \node[right=0.5 of a_ss.center] {Optimizer}; 
	
	\draw[-{latex}] (theta1) -- (hg1);
	\draw[-{latex}] (theta_s) -- (hg_s);
	\draw[-{latex}] (theta_s1) -- (hg_s1);
	\draw[-{latex}] (theta_s2) -- (hg_s2);
	\draw[-{latex}] (theta_ss) -- (hg_ss);
	\draw[-{latex}] (theta_ss1) -- (hg_ss1);
	\draw[-{latex}] (theta_T) -- (hg_T);
	
	\draw[dashed,-{latex}] (phi0) -- (phi_s);
	\draw[-{latex}] (phi_s) -- (phi_ss);
	\draw[dashed,-{latex}] (phi_ss) -- (phi_S);
	
	\draw[dashed,-{latex}] (hg1) -- (hg_s);
	\draw[-{latex}] (hg_s) -- (hg_s1);
	\draw[-{latex}] (hg_s1) -- (hg_s2);
	\draw[dashed,-{latex}] (hg_s2) -- (hg_ss);
	\draw[-{latex}] (hg_ss) -- (hg_ss1);
	\draw[dashed,-{latex}] (hg_ss1) -- (hg_T);
	
	\draw[-{latex}] (phi0.north) -- (a0.south);
	\draw[-{latex}] (phi0.north) -- (a1.south);
	
	\draw[-{latex}] (phi_s.north) -- (a_s.south);
	\draw[-{latex}] (phi_s.north) -- (a_s1.south);
	\draw[-{latex}] (phi_s.north) -- (a_s2.south);
	\draw[-{latex}] (phi_ss.north) -- (a_ss.south);
	
	\draw[very thick,dashed,-{latex},bend left=10] (hg_s1.north east) to node [midway,above] {{\footnotesize \textbf{\emph{glocal} hypergradient estimation}}}(hg_T.north);

	\begin{scope}[on background layer]
		\fill[fill=WRed,rounded corners] (hg_s1.north west) rectangle (hg_s1.south east);
		\fill[fill=WRed,rounded corners] (hg_s2.north west) rectangle (hg_s2.south east);
		\fill[fill=WRed,rounded corners] (hg_ss.north west) rectangle (hg_ss.south east);
		\filldraw[fill=none,dashed,very thick] (hg_s1.north west) rectangle (hg_ss.south east);
		\fill[fill=WBlue,rounded corners] (hg_T.north west) rectangle (hg_T.south east);
	\end{scope}
\end{tikzpicture}
}
\caption{The schematic view of hypergradients.
We want to use global hypergradient \protect\tikz[baseline=(x.base),]{\protect\node(x)[rectangle,fill=WBlue,rounded corners,inner sep=1]{$\vh_T$};} to update hyperparameters, but it needs to wait for the completion of the entire training process.
Updating hyperparameters $\vphi_{s}$ with a local hypergradient \protect\tikz[baseline=(x.base)]{\protect\node(x)[rectangle,fill=WRed,rounded corners,inner sep=1]{$\vh_{s\tau}$};} is efficient but may lead to suboptimal solutions. 
To leverage both advantages, we propose \textbf{\emph{glocal} hypergradient estimation} that approximates global hypergradient \protect\tikz[baseline=(x.base),]{\protect\node(x)[rectangle,fill=WBlue,rounded corners,inner sep=1]{$\vh_T$};} using a local hypergradient trajectory \protect\tikz[baseline=(x.base)]{\protect\node(x)[rectangle,fill=WRed,rounded corners,inner sep=1]{$\vh_t$};} for $t\in[(s-1)\tau+1,\dots,s\tau]$, enabling to update $\vphi_{s}$ using $\vh_T$ efficiently.}
\label{fig:schematic}
\end{figure}

\section{Background}\label{sec:preliminary}

\subsection{Gradient-based Bi-level Optimization}\label{sec:gbo}

Gradient-based bi-level optimization solves the outer-level problem, \cref{eq:outer}, using gradient-based optimization methods by obtaining hypergradient $\nabla_\vphi\tilde{\ell}(\vtheta^\ast(\vphi))$.

When the inner-level problem (\cref{eq:inner}) involves the training of neural networks, which is our main focus in this paper, computing its minima is infeasible.
Thus, we instead truncate the original inner problem to the following $T$-step optimization process:
\begin{align}
    	                &\vphi^\ast\in\argmin_{\vphi}\tilde{\ell}(\vtheta_T(\vphi); \tilde{\gD}) \label{eq:outer_T}    \\ 
&	\text{such that}~ \vtheta_{t+1}(\vphi)=\Theta(\vtheta_t, \vphi; \gD), \label{eq:inner_T}
\end{align}
\noindent for $t=0,1,\dots,T-1$.
$\Theta$ is a gradient-based optimization algorithm, such as $\Theta(\vtheta_t, \vphi; \gD)=\vtheta_t-\eta\nabla_\vtheta\ell(\vtheta_t, \vphi; \gD)$ in the case of vanilla gradient descent, where $\eta$ is a learning rate and can be an element of $\vphi$.

Similarly, we focus on the case that \cref{eq:outer_T} is also optimized by iterative gradient-based optimization
\begin{align}
	\vphi_{s+1}=\tilde{\Theta}(\vtheta_{T}, \vphi_s; \tilde{\gD})\quad\text{such that } \vtheta_{t+1}(\vphi)=\Theta(\vtheta_{t}, \vphi_s; \gD). \label{eq:outer_global}
\end{align}
where $s=0,1,\dots, S-1$.
The optimization algorithm $\tilde{\Theta}$ in \cref{eq:outer_global} adopts hypergradient $\nabla_\vphi\tilde{\ell}(\vtheta_T(\vphi))$, which is referred to as \emph{global} hypergardient~\citep{Maclaurin2015,micaelli2021gradientbased,Domke2012}.
This design requires $T$-iteration model training $S$ times, which is called non-greedy~\citep{micaelli2021gradientbased}.
A greedy approach that alternately updates model parameters and hyperparameters is also possible: hyperparameters are updated every $\tau~(<T)$ iteration by hypergradients obtained by a playout until the $T$-th model update $\nabla_{\vphi_{s}}\tilde{\ell}(\vtheta_T(\vphi_{s}))$.
In both cases, gradient-based bi-level optimization using global hypergradients requires a computational cost of $O(ST)$, which is computationally challenging for a large $T$.

Instead of waiting for the completion of model training to compute global hypergradient, \emph{local} hypergradient $\nabla_{\vphi_{s-1}}\tilde{\ell}(\vtheta_{s\tau}(\vphi_{s-1}))$ obtained every $\tau$ iteration can also be used to greedily update $\vphi$~\citep{franceschi17a,luketina2016scalable,wu2018understanding}.
This relaxation replaces \cref{eq:outer_T,eq:inner_T} as
\begin{align}
	\vphi_{s+1}=\tilde{\Theta}(\vtheta_{(s+1)\tau}, \vphi_s; \tilde{\gD}) \quad\text{such that } \vtheta_{t+1}(\vphi)=\Theta(\vtheta_{t}, \vphi_s; \gD),
\end{align}
\noindent for $t\in\gI_s=[(s-1)\tau, (s-1)\tau+1, \dots, s\tau-1]$.
By setting $\tau\approx T/S$, that is, $S\approx T/\tau$, this approach approximately optimizes $\vphi$ in an $O(T)$ computational cost.
A downside of this approach is that the local hypergradients may be biased, especially when the inner optimization involves stochastic gradient descent~\citep{wu2018understanding,micaelli2021gradientbased}.

\subsection{Computation of Hypergradients}
To compute such global or local hypergradients, a straightforward approach is to differentiate through the $T$-step optimization process~\citep{Finn2017b,grefenstette2019,Domke2012}.
This unrolling approach is applicable to any differentiable hyperparameters.
Yet, it suffers from large memory requirements, $O(Tp)$, when reverse-mode automatic differentiation is used.
This challenge may be alleviated by using forward-mode automatic differentiation~\citep{micaelli2021gradientbased,franceschi17a,deleu2022continuoustime}.
Moreover, differentiating through long unrolled computational graphs suffers from gradient vanishing/explosion, limiting its applications.

Alternatively, implicit differentiation can also be used with an assumption that $\vtheta_T$ reaches close enough to a local optimum.
The main bottleneck of this approach is the computation of inverse Hessian with respect to model parameters~\citep{Bengio2000}, which can be bypassed by iterative linear system solvers~\citep{Pedregosa2016a,Rajeswaran2019,blondel2021}, the Neumann series approximation~\citep{lorraine20a}, and the Nystr\"om method~\citep{hataya2023nystrom} along with matrix-vector products.
Although this approach is efficient and used in large-scale problems~\citep{choe2023making,Hataya2022,zhang21s}, its application is limited to hyperparameters that directly change inner-level loss functions.
In other words, the implicit-differentiation approach cannot be used for other hyperparameters, such as the learning rate of the inner-level optimizer $\Theta$.

The proposed glocal hypergradient estimation can rely on the unrolling approach but differentiating through only $\tau ~(\ll T)$ iterations; thus, this approach is applicable to various hyperparameters as the unrolling approach while achieving the efficiency like the implicit differentiation approach in terms of being independent of $T$ to compute hypergradients.

\subsection{Koopman Operator Theory}\label{sec:koopman}

Here, we roughly introduce the Koopman operator theory. 
For a complete introduction, refer to, for example, \citet{brunton22_modernkoopman,kutz16_dmd}.

Consider a discrete-time dynamical system on $\R^m$ represented by $f:\R^m\to\R^m$ such that $\vx_{t+1}=f(\vx_{t})$ for $t\in\Z$
Then, given a measurement function %
$g$ in some function space $\mathcal{G}$, the Koopman operator is defined as the following infinite-dimensional linear operator $\gK$ such that $\gK g = g\circ f$.
In other words, the Koopman operator advances via an observable $g$ the dynamical system one step forward:
\begin{equation}
	\gK g(\vx_t)=g(f(\vx_t))=g(\vx_{t+1}). \label{eq:koopman_onestep}
\end{equation}
We suppose that $\gK$ is invariant in $\mathcal{G}$, i.e., $\gK g \in\mathcal{G}$ for any $g\in\mathcal{G}$.
Then, an observation at any time $t$ can be represented %
with pairs of eigenfunctions $\varphi_i:\R^m\to\mathbb{C}$ and eigenvalues $\lambda_i\in\mathbb{C}$ of the Koopman operator $\gK$ as
\begin{align}
	g(\vx_t) = \gK^{t-1}g(\vx_0) 
	       = \gK^{t-1}\sum_j  \varphi_j(\vx_0) v_j 
	         = \sum_j \lambda_j^{t-1} \varphi_j(\vx_0) v_j, \label{eq:koopman_decomposition}
\end{align}
where $v_j=\left<\varphi_j,g\right> (\in\mathbb{C}$), often referred to as a Koopman mode.
For sufficiently large $t$, terms with $|\lambda_j|\neq 1$ diverge or disappear.
Consequently, if the dynamics involves no diverging modes, the steady state of $g(\vx_t)$ can be written as 
\begin{equation}
g(\vx_t)\xrightarrow[t \to \infty]{}\sum_{j: |\lambda_j|=1}\lambda_j^{t-1} \varphi_j(\vx_0) v_j. \label{eq:koopman_infty}
\end{equation}
Terms with $|\lambda_j|=1$ but $\lambda_j\neq 1$ will oscillate in the state space, and those with $\lambda_j=1$ will converge to a fixed point.

Numerically, given a sequence $\vg(\vx_0),\vg(\vx_1),\ldots,\vg(\vx_t)$ for a set of measurement functions $\vg:=[g_1,\ldots,g_n]^T$ ($g_i\in\mathcal{G})$, Koopman operator $\gK$ can be approximated with a finite-dimensional matrix $\mK$ by using dynamic mode decomposition (DMD, \citeauthor{kutz16_dmd} \citeyear{kutz16_dmd}) that solves
\begin{equation}
\mK := \argmin_{\mA\in\mathbb{C}^{n\times n}} \sum_{t=0}^{t-1} \left\|\mA\vg(\vx_t)-\vg(\vx_{t+1})\right\|_2^2\label{eq:dmd}
\end{equation}
so that $\gK g \approx \vc^H\mK \vg$ for any $g=\vc^H\vg$ ($\vc\in\mathbb{C}^n$).

The Koopman operator theory has demonstrated its effectiveness in the deep learning literature~\citep{hashimoto2024koopmanbased,konishi2023stable,naiman2023operator}.
In particular, some works used it in the optimization of neural networks~\citep{dogra2020optimizing,manojlovic2020applications} and network pruning~\citep{redman2022an}.
However, these methods require DMD on the high-dimensional neural network parameter space, making its applications to large-scale problems difficult.
Our method also relies on the Koopman operator theory, but we used DMD for the lower-dimensional hypergradients to indirectly advance the dynamics of neural network training, allowing more scalability.

\section{Glocal Hypergradient Estimation}\label{sec:method}

As explained in \cref{sec:gbo}, both global and local hypergradient have pros and cons.
Specifically, global hypergradient can optimize the desired objective (\cref{eq:outer_T}), but it is computationally demanding.
On the other hand, local gradient can be obtained efficiently, but it may diverge from the final objective.

This paper proposes \emph{glocal} hypergradient that leverages the virtues of these contrastive approaches.
Namely, glocal hypergradient approximates the global hypergradient from a trajectory local hypergradients using the Koopman operator theory to achieve
\begin{align}
	\vphi_{s+1}=\tilde{\Theta}(\hat{\vtheta}_\infty^{(s)}, \vphi_s; \tilde{\gD}) \quad \text{such that } \vtheta_{t}(\vphi)=\Theta(\vtheta_{t-1}, \vphi_s; \gD).
\end{align}
\noindent for $t\in\gI_s$.
$\hat{\vtheta}_\infty^{(s)}$ is the estimate of the model parameter after the model training $\theta_T$ so that $\nabla_\vphi\tilde{\ell}(\hat{\vtheta}_\infty^{(s)}(\vphi))\approx \nabla_\vphi\tilde{\ell}(\vtheta_T(\vphi))$ only by using local hypergradients obtained in $\gI_s$.
In the remaining text, the superscript indicating the outer time step (${}^{(s)}$) is sometimes omitted if not confusing for brevity.
\Cref{fig:schematic} schematically illustrates the proposed glocal hypergradient estimation with global and local hypergradients.

To approximate global hypergradient from a trajectory of local hypergradients, we use the Koopman operator theory.
We regard the transition of local hypergradients during $\gI_s$ as
\begin{align}
\vh_t\coloneqq\nabla_{\vphi_s}\tilde{\ell}(\vtheta_t)=\nabla_\vtheta\tilde{\ell}(\vtheta_t)\frac{\standd\vtheta_t}{\standd\vphi_s},
\end{align}
as a nonlinear dynamical system in terms of $t$.
When using forward mode automatic differentiation that computes Jacobian-matrix product $J_f: (\vx, \mV)\mapsto \partial f(\vx)\mV$, where $f: \R^{n\times m}$, $\vx\in\R^n$, and $\mV\in\R^{n\times k}$ for some $n,m,k\in\N$, the right-hand side can be computed iteratively as
\begin{align}
    \underbrace{\frac{\standd\vtheta_t}{\standd\vphi_s}}_{\eqqcolon\mZ_t}
    =\frac{\partial \Theta(\vtheta_{t-1}, \vphi_s)}{\partial \vtheta_{t-1}}\underbrace{\frac{\standd\vtheta_{t-1}}{\standd\vphi_s}}_{=\mZ_{t-1}}+\frac{\partial \Theta(\vtheta_{t-1}, \vphi_s)}{\partial \vphi_s}=J_{\Theta(\cdot, \vphi_s)}(\vtheta_{t-1}, \mZ_{t-1})+J_{\Theta(\theta_{t-1}, \cdot)}(\vphi_s, \mI_q)\notag
\end{align}
where $\mZ_1\in\R^{p\times q}$ is zero.
We assume that there exist a Koopman operator $\gK$ and an observable $\vg$ that forwards this dynamical system one step as \cref{eq:koopman_onestep}, \ie, $\gK\vg(\nabla_\vphi\tilde{\ell}(\vtheta_t))=\gK \vg(\vh_t)=\vg(\vh_{t+1})=\vg(\nabla_\vphi\tilde{\ell}(\vtheta_{t+1}))$, and the elements of $\vg$ can be represented by $\gK$'s eigenfunctions as in \cref{eq:koopman_decomposition}.
We also assume that $\gK$ can be approximated by a finite-dimensional matrix $\mK$ using DMD.

Then, we can estimate the global hypergradient from local hypergradients:
\begin{align}
    \vg(\vh_T)=\gK^{T-t}\vg(\vh_{t})
            =\gK^{T-t}\sum_j \varphi_j(\vh_{t}) \vv_j
            \approx \mK^{T-{t}}\sum_j b_j\vu_j
            = \sum_j b_j{\lambda_j}^{T-t}\vu_j,
\end{align}
where $t\in\gI_s$, $b_j\in\mathbb{C}$ and $\vu_j\in\mathbb{C}^n$ is the $j$-th eigenvector with respect to the eigenvalue $\lambda_j$ of $\mK$, \cf, \cref{eq:koopman_decomposition}.

If the spectral radius, the maximum norm of eigenvalues, is larger than $1$, the global hypergradient will diverge, suggesting the current hyperparameters are in inappropriate ranges.
Also, if there exists $k$ such that $\abs{\lambda_k}=1$ but $\lambda_k\neq 1$, the global hypergradient oscillates, suggesting the instability of the current hyperparameter choices.
Thus, we suppose that the spectral radius is not greater than $1$, and if it is $1$, then $\lambda_k=1$ for all $k$ and ignore terms with $\lambda_j< 1$ as they will disappear for sufficiently large $\tau$.
Indeed, this assumption holds in practical cases as shown in \cref{subsec:analysis}.
Subsequently, we approximately obtain glocal hypergradient
\begin{equation}
\hat{\vh}_\infty\coloneqq\nabla_\vphi\tilde{\ell}(\hat{\vtheta}_\infty)=\vg^{-1}(\sum_{j: \lambda_j=1}b_j\vu_j),\label{eq:glocal_hypergradient}
\end{equation}
as \cref{eq:koopman_infty} and updating hyperparameters by a gradient-based optimizer, such as vanilla gradient descent $\vphi_{s+1}=\tilde{\Theta}(\hat{\vtheta}_\infty^{(s)}, \vphi_s; \tilde{\gD})=\vphi_s-\tilde{\eta}\hat{\vh}^{(s)}_\infty$, where $\tilde{\eta}$ is a learning rate.

In summary, \Cref{alg:glocal} shows the pseudocode of the glocal hypergradient estimation with the setting that vanilla gradient descent is used for both inner and outer optimization.

\begin{algorithm}[tb]
\algrenewcommand\algorithmicdo{\textbf{:}}
\algrenewcommand\algorithmicrequire{\textbf{input}}
\algrenewcommand\algorithmicfunction{\textbf{def}}
\algrenewcommand\algorithmicreturn{\textbf{return}}
\caption{Pseudocode of the glocal hypergradient estimation}
\label{alg:glocal}
\begin{algorithmic}[1]
\Require Initialize $\vtheta, \vphi$ and set $\mZ=\mO_{p\times q}$
\State \texttt{\%Model training for $\tau$ iterations}
\Function {training}{$\vtheta$, $\vphi$}
\For{$t$ in $1,2,\dots,\tau$}
\State $\vtheta\gets\vtheta-\eta\nabla_\vtheta\ell(\vtheta, \vphi; \gD)$
\State $\mZ\gets J_{\Theta(\cdot, \vphi_s)}(\vtheta, \mZ)+J_{\Theta(\theta, \cdot)}(\vphi_s, \mI_q)$ \label{alg:line:jacobian}
\State $\vh_{t}\gets \nabla_\vtheta\tilde{\ell}(\vtheta, \tilde{\gD})\mZ$
\EndFor
\Return $[\vh_1, \dots, \vh_\tau]$
\EndFunction
\State \texttt{\%Hyperparamter update}
\For{$s$ in $1,2,\dots,S$}
\State $[\vh_1, \dots, \vh_\tau]\gets \Call{Training}{\vtheta, \vphi}$ 
\State Compute $\hat{\vh}_\infty$ from $[\vh_1,\dots,\vh_\tau]$ using \cref{eq:dmd,eq:glocal_hypergradient}\label{alg:line:dmd}
\State $\vphi \gets \vphi - \tilde{\eta}\hat{\vh}_\infty$
\EndFor
\end{algorithmic}
\end{algorithm}

\begin{description}[leftmargin=*,topsep=0pt,partopsep=0pt,parsep=0pt]
\item[Computational Cost] 
Training neural networks for $\tau$ steps, as the function \textsc{Training} in \cref{alg:glocal}, requires time $O(\tau p)$ and space $O(p)$ complexities using the standard reverse-mode automatic differentiation.
To compute the hypergradients, we have two options, \ie, forward-mode automatic differentiation and reverse-mode automatic differentiation~\citep{baydin2018}.
The first approach needs the evaluation of the Jacobian-matrix product in \cref{alg:line:jacobian}, resulting in time $O(q\cdot \tau p)$ and space $O(p+\tau q)$ complexities by repeating Jacobian-vector product.
On the other hand, the reverse-mode approach involves $\tau$ evaluations of \textsc{Training}, computed in time $O(\tau\cdot(\tau p+q))$ and space $O(\tau(p+q))$~\citep{franceschi17a}.
In the experiments, we adopt forward-mode automatic differentiation to avoid reverse-mode's quadratic complexities with respect to $\tau$.
However, when the number of hyperparameters $q$ is large, and $\tau$ is limited, reverse-mode automatic differentiation would be preferred.

Additionally, the DMD algorithm at \cref{alg:line:dmd} requires space $O(q^2)$ and time $O(\min(q^2\tau, q\tau^2)+\min(q,\tau)^3)$ complexities, where each term comes from singular value decomposition (SVD) to solve \cref{eq:dmd} and eigendecomposition of $\mK$.
Compared with the cost to evaluate the Jacobian, these computational costs are negligible.

\Cref{tab:complexities} compares the complexities of gradient-based hyperparameter optimization with global, local, and glocal hypergradients, when forward-mode automatic differentiation is adopted for the outer-level problem.
Because the computation of global hypergradients corresponds to the case where $\tau=T$, and it is used $S$ times, its time complexity is $O(ST pq)$.
As can be seen, the proposed estimation is as efficient as computing a local hypergradient.

\begin{table}
    \centering
    \caption{Time and space complexities of gradient-based hyperparameter optimization with global, local, and glocal hypergradients using forward-mode automatic differentiation to compute hypergradients.}
    \label{tab:complexities}
    \begin{tabular}{cccc}
    \toprule
            &    Global     &   Local      & Glocal       \\
    \cmidrule(r){1-1} \cmidrule(rl){2-4}
    Time   &   $O(ST pq)$    &  $O(S\tau pq)$  & $O(S\tau pq)$ \\
    Space   &   $O(p+q)$      &  $O(p+q)$      & $O(p+\tau q)$    \\
    \bottomrule
    \end{tabular} 
\end{table}

\item[Theoretical Property]

The error of the proposed glocal hypergradient from the actual global hypergradient is bounded as follows.
\begin{theorem}\label{thm:error}
    Assume that there exists a finite-dimensional Koopman operator $\bar{\mK}$ governing the trajectory of hypergradients in each $\gI_s$, which is approximated with $\mK$ using DMD, and the spectral radii of $\bar{\mK}$ and $\mK$ are 1. Then,
	\begin{align}
	   \norm{\vh_T-\hat{\vh}_\infty}_2
	   &\leq \norm{\mU}_2\norm{\mU^{-1}}_2\{\norm{\ve_\tau}_2+(T-s\tau)\varepsilon_\tau\} + \sum_{j: \abs{\lambda_j}< 1}\abs{b_j}\abs{\lambda_j}^{\tau}\norm{\vu_j}_2, \label{eq:thm_error}
	\end{align}
	where $\mU=[\vu_1,\dots,\vu_q]$, $\ve_\tau=\vh_{s\tau}-\vg^{-1}\left(\mK \vg(\vh_{s\tau-1})\right)$, and $\varepsilon_\tau$ is a constant only depends on the number of local hypergradients $\tau$.
\end{theorem}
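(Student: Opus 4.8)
The plan is to split the total error into a \emph{modeling} error, coming from using the DMD surrogate $\mK$ in place of the true operator $\bar{\mK}$ over the extrapolation horizon from $s\tau$ to $T$, and a \emph{truncation} error, coming from discarding the modes with $\abs{\lambda_j}<1$; I would bound each separately and recombine by the triangle inequality. Throughout I work in observable space and transfer back through $\vg$, and I lean on two structural facts guaranteed by the hypotheses: that $\mK$ is diagonalizable as $\mK=\mU\diag(\lambda_1,\dots,\lambda_q)\mU^{-1}$, and that its spectral radius equals $1$, so that $\norm{\mK^m}_2\le\norm{\mU}_2\norm{\mU^{-1}}_2$ uniformly for every $m\ge 0$. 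The second fact is the reason powers of $\mK$ never amplify, and it is what makes the accumulated error controllable.

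First, for the truncation error, I would write the DMD decomposition of the observable at the start of the interval as $\vg(\vh)=\sum_j b_j\vu_j$ and propagate it forward; the full $\mK$-prediction of $\vg(\vh_T)$ is then $\sum_j b_j\lambda_j^{\,T-(s-1)\tau}\vu_j$, whereas the glocal estimate $\vg(\hat{\vh}_\infty)$ retains only the $\lambda_j=1$ modes $\sum_{j:\lambda_j=1}b_j\vu_j$. Their difference is exactly $\sum_{j:\abs{\lambda_j}<1}b_j\lambda_j^{\,T-(s-1)\tau}\vu_j$, and since the horizon from any in-interval reference to $T$ is at least $\tau$ (because $T\ge s\tau$), I bound $\abs{\lambda_j}^{\,T-(s-1)\tau}\le\abs{\lambda_j}^{\tau}$ for the decaying modes. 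Taking norms yields the second summand $\sum_{j:\abs{\lambda_j}<1}\abs{b_j}\abs{\lambda_j}^{\tau}\norm{\vu_j}_2$.

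Second, for the modeling error, I would compare the true trajectory $\vg(\vh_{s\tau+n})=\bar{\mK}^{\,n}\vg(\vh_{s\tau})$ against the surrogate prediction $\mK^{\,n}\vg(\vh_{s\tau})$. Setting $\vd_n\coloneqq\vg(\vh_{s\tau+n})-\mK^{\,n}\vg(\vh_{s\tau})$ and using the exact advancement $\vg(\vh_{t+1})=\bar{\mK}\vg(\vh_t)$, I get the recursion $\vd_{n+1}=\mK\vd_n+(\bar{\mK}-\mK)\vg(\vh_{s\tau+n})$ with $\vd_0=\vzero$, whose solution is $\vd_n=\sum_{k=0}^{n-1}\mK^{\,n-1-k}(\bar{\mK}-\mK)\vg(\vh_{s\tau+k})$. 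Taking norms, applying the uniform bound $\norm{\mK^m}_2\le\norm{\mU}_2\norm{\mU^{-1}}_2$, and controlling the $T-s\tau$ accumulated one-step residuals in terms of the last observed residual $\ve_\tau$ and the $\tau$-dependent DMD accuracy $\varepsilon_\tau$ produces the first summand $\norm{\mU}_2\norm{\mU^{-1}}_2\{\norm{\ve_\tau}_2+(T-s\tau)\varepsilon_\tau\}$. Summing the two contributions gives the claimed inequality.

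The hard part will be the modeling term: the one-step residuals $(\bar{\mK}-\mK)\vg(\vh_{s\tau+k})$ live at future, \emph{unobserved} times $s\tau+k$, so they cannot be measured and must be controlled uniformly through the single observable residual $\ve_\tau$ together with the constant $\varepsilon_\tau$ depending only on $\tau$. Making this uniform control rigorous — that is, justifying that the residuals along the unseen extrapolation do not drift beyond what $\norm{\ve_\tau}_2$ and $\varepsilon_\tau$ capture — is the delicate step, and it is precisely where the spectral-radius-one assumption does the essential work by preventing $\norm{\mK^m}_2$ from magnifying the summed residuals.
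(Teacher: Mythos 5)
Your overall architecture is exactly the paper's: insert the full DMD prediction $\hat{\vh}_T$ as an intermediate point via the triangle inequality, bound the truncation part $\norm{\hat{\vh}_T-\hat{\vh}_\infty}_2$ by keeping only the decaying modes and using that the extrapolation horizon is at least $\tau$, and bound the modeling part $\norm{\vh_T-\hat{\vh}_T}_2$ by a conditioning factor times accumulated residuals. Your truncation bound matches the paper's step for step (the offset $T-(s-1)\tau$ versus the paper's $T-s\tau$ is immaterial, since both are at least $\tau$).

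The gap is in the modeling term. The paper does not prove this term from scratch: it invokes Theorem 3.6 of \citet{lu2020prediction} (relaxed from spectral radius $<1$ to $\leq 1$, as discussed in \cref{ap:thoery}), which \emph{is} the inequality $\norm{\vh_T-\hat{\vh}_T}_2\leq\norm{\mU}_2\norm{\mU^{-1}}_2\{\norm{\ve_\tau}_2+(T-s\tau)\varepsilon_\tau\}$. Your recursion $\vd_{n+1}=\mK\vd_n+(\bar{\mK}-\mK)\vg(\vh_{s\tau+n})$ and its solution are correct and are essentially how such a result is proved, but the step you yourself flag as the hard part --- bounding the residuals $(\bar{\mK}-\mK)\vg(\vh_{s\tau+k})$ at \emph{unobserved} times uniformly by a constant $\varepsilon_\tau$ depending only on $\tau$ --- is precisely the content of the cited theorem, and you assert it rather than establish it. To close it you would need, beyond what you use: diagonalizability and spectral radius $\leq 1$ of $\bar{\mK}$ (not only of $\mK$), so that $\norm{\vg(\vh_{s\tau+k})}_2\leq\norm{\bar{\mU}}_2\norm{\bar{\mU}^{-1}}_2\norm{\vg(\vh_{s\tau})}_2$ with $\bar{\mU}$ the eigenvector matrix of $\bar{\mK}$ keeps the unseen trajectory bounded, together with a relation of the form $c_\tau\geq\norm{\bar{\mK}-\mK}_2^2$ tying the operator error to $\varepsilon_\tau$ (this is exactly how the paper's appendix defines $\varepsilon_\tau$). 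A smaller symptom of the same issue: your anchoring $\vd_0=\vzero$ can never produce the $\norm{\ve_\tau}_2$ term; that is harmless (it is additive slack in the claimed bound), but it shows your derivation is not the one generating the stated constants, so you cannot claim to have recovered the theorem's right-hand side without the missing uniform-control argument.
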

\begin{proof}
The left hand side of \cref{eq:thm_error} can be decomposed as 
$\norm{\vh_T-\hat{\vh}_\infty}_2\leq \norm{\vh_T-\hat{\vh}_T}_2+\norm{\hat{\vh}_T-\hat{\vh}_\infty}_2$,
 where $\hat{\vh}_T$ is obtained from the DMD algorithm as 
 $\hat{\vh}_T=\sum_j b_j\lambda_j^{T-s\tau}\vu_j$.
Then, we get
\begin{equation}
    \norm{\vh_T-\hat{\vh}_T}_2\leq \norm{\mU}_2\norm{\mU^{-1}}_2\{\norm{\ve_\tau}_2+(T-s\tau)\varepsilon_\tau\}\label{eq:lu_pred}
\end{equation}
 by using Theorem 3.6 of \citet{lu2020prediction}, and
 \begin{align}
     \norm{\hat{\vh}_T-\hat{\vh}_\infty}_2
 =\norm{(\sum_j-\sum_{j: \abs{\lambda_j}=1})b_j{\lambda_j}^{T-s\tau}\vu_j}_2
 =\norm{\sum_{j:\abs{\lambda_j}< 1}b_j{\lambda_j}^{T-s\tau}\vu_j}_2
 \leq \sum_{j: \abs{\lambda_j}< 1}\abs{b_j}\abs{\lambda_j}^\tau\norm{\vu_j}_2,
 \end{align}
since $T-s\tau\geq \tau$.
\end{proof}

\begin{remark}
The first term of \cref{eq:thm_error} decreases as the outer optimization step $s$ proceeds.
In this term, $\norm{\ve_\tau}$ is minimized by the DMD algorithm as \cref{eq:dmd}.
$\varepsilon_\tau$ and the second term of \cref{eq:thm_error} decrease as we use more local hypergradradients $\tau$ for estimation, because $\mK$ converges to $\bar{\mK}$ as $\tau$ increases~\citep{korda2018convergence}.
When $T$ is constant, increasing $\tau$ improves the quality of the glocal hypergradient estimation while decreasing the number of the outer optimization step, and we need to take their trade-off in practice.
\end{remark}

\end{description}

\section{Experiments}\label{sec:experiments}

This section empirically demonstrates the effectiveness of the glocal hypergradient estimation.

\begin{description}[leftmargin=*,topsep=0pt,partopsep=0pt,parsep=0pt]
    \item[Implementation] We implemented neural network models and algorithms, including DMD, using \texttt{JAX} (v0.4.28)~\citep{jax}, \texttt{optax}~\citep{deepmind2020jax}, and \texttt{equinox}~\citep{kidger2021equinox}.
Reverse-mode automatic differentiation is used for model gradient computation.
Forward-mode automatic differentiation is adopted for hypergradient computation.

In the following experiments, instead of using the vanilla DMD shown in \cref{eq:dmd}, we adopt Hankel DMD~\citep{arbabi2017ergodic} that augments the observable $\vg$ with time-delayed coordinates as $\vg(\vh_t)=[\vh_t; \vh_{t+1}; \dots; \vh_{t+m-1}]\in\R^{mq}$
for some positive integer $m$.

    \item[Setup] 
For DMD, we use the last $\sigma$ hypergradients out of $\tau$ hypergradients in each $\gI_s$ because the dynamics immediately after updating hyperparameters may be unstable.
Throughout the experiments, we optimize model parameters 10k times and hyperparameters every 100 model updates, \ie, $\tau=100$, using Adam optimizer with a learning rate of $0.1$~\citep{Kingma2015}.
We set $m=10$ and $\sigma=80$, except for the analysis in \cref{subsec:analysis}.

When computing hypergradients, we use hold-out validation data $\tilde{\gD}$ from the original training data, following \cite{micaelli2021gradientbased}.
To minimize the effect of stochasticity, we use as large a minibatch size as possible for the validation.
Average performance on test data over three different random seeds is reported.
Further experimental details can be found in \cref{ap:experimental_settings}.

\item[Baselines] 
We adopt local and global baselines that greedily update hyperparameters using global and local hypergradients, respectively.
As the global baseline is prone to diverge and computationally expensive, we can only present its results on MNIST-sized datasets.
\end{description}

The source code is included in the supplemental and will be released upon publication.

\subsection{Optimizing Optimizer Hyperparameters}\label{sec:exp_optim}

Appropriately selecting and scheduling hyperparameters of optimizers, in particular, learning rates, is essential to the success of training machine learning models~\citep{bergstra2011algorithms}.
Here, we demonstrate the validity of the glocal hypergradient estimation in optimizing such hyperparameters.
Unlike \citep{micaelli2021gradientbased}, we adopt forward-mode automatic differentiation to differentiate through the optimization steps, which allows to use any differentiable optimizers, including SGD and Adam, for the inner optimization without requiring any manual reimplementation.

\begin{description}[leftmargin=0pt,topsep=0pt,partopsep=0pt,parsep=0pt]
\item [LeNet on MNIST variants]
First, we train LeNet~\citep{lecun2012efficient} using SGD with learnable learning rate, momentum, and weight-decay rate hyperparameters, and Adam~\citep{Kingma2015} with learnable learning rate and betas.
The logistic sigmoid function is applied to these hyperparameters to limit their ranges in $(0, 1)$.
MNIST variants, namely, MNIST~\citep{lecun2010mnist}, Kuzushiji-MNIST (KMNIST, \cite{clanuwat2018deep}) and Fashion-MNIST (FMNIST, \cite{xiao2017fashion}) are used as datasets.
The LeNet has 15k parameters and is trained for 10k iterations.

\Cref{fig:fmnist,fig:fmnist_01} shows learning curves with the transition of hyperparameters of optimizers on the FMNIST dataset.
We can observe that the development of hyperparameters of the glocal approach exhibits similar trends with the global baseline, and the glocal and global approaches show nearly identical performance.
Contrarily, the local baseline changes hyperparameters aggressively, resulting in suboptimal performance.
Other results can be found in \cref{ap:additional_results}.

\item [WideResNet on CIFAR-10/100]
    Next, we train WideResNet 28-2~\citep{Zagoruyko2016} on CIFAR-10/100~\cite{krizhevsky2009learning} using SGD with learnable learning rate and weight-decay rate hyperparameters.
    The used model has 1.5M parameters and is trained for 10k iterations.
    The results are presented in \cref{fig:cifar10,fig:cifar100}
    Contrarily to the local baseline drastically changing hyperparameters, the glocal estimation adjusts the hyperparameters gradually and yields better performance, partially because it can predict the future state.
\end{description}

\begin{figure}
    \centering
    \includegraphics[width=\linewidth]{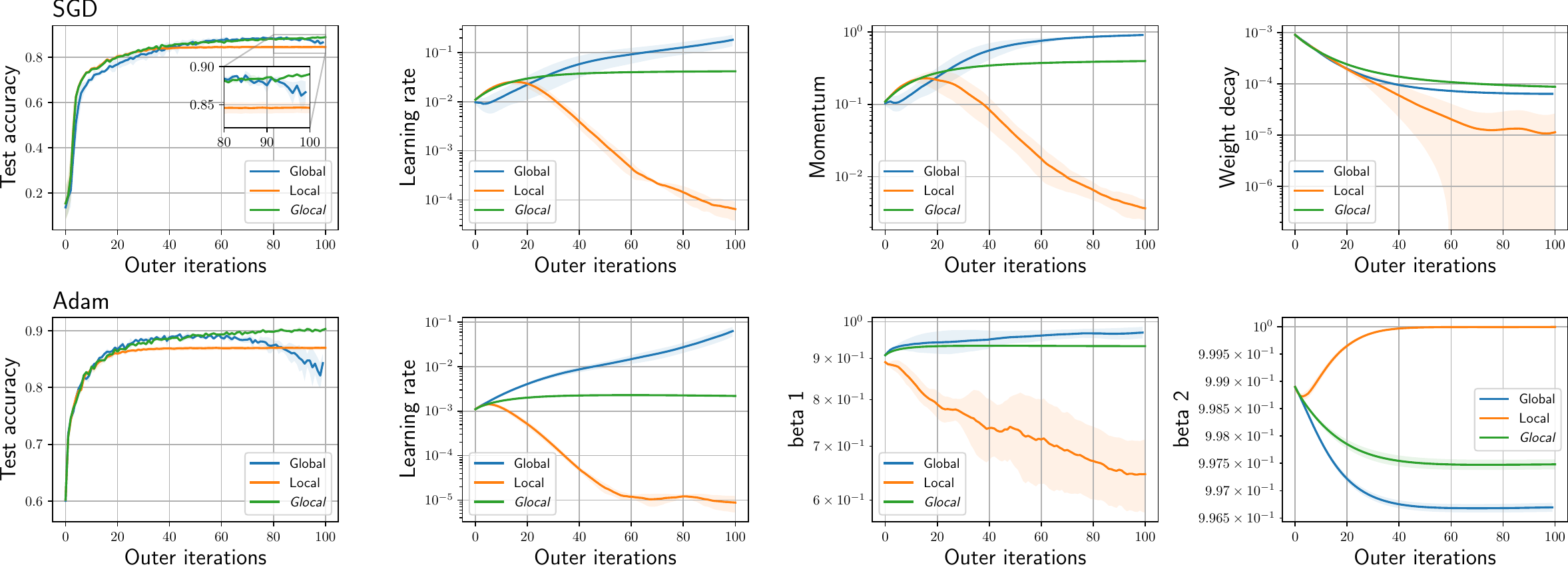}
    \caption{Test accuracy and the transition of the hyperparameters of SGD and Adam. The proposed local approach shows similar hyperparameter development to the global baseline.}
    \label{fig:fmnist}
\end{figure}

\begin{figure*}[t]
	\centering
	\includegraphics[width=\linewidth]{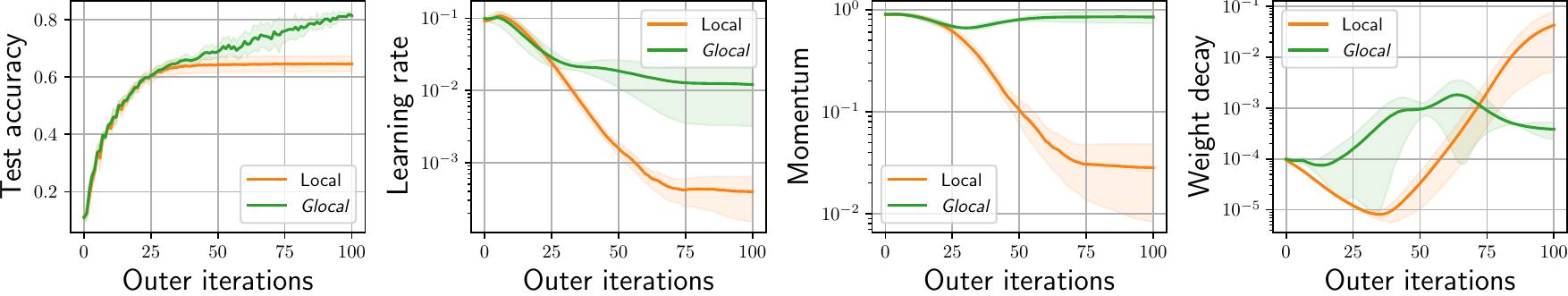}
	\caption{The transition of the learning rate hyperparameter and test accuracy curves of WideResNet 28-2 on CIFAR-10.}
	\label{fig:cifar10}
\end{figure*}

\subsection{Data Reweighting}\label{sec:exp_reweighting}

Data reweighting task trains a meta module $\mu_\vphi$ to reweight a loss value to each example to alleviate the effect of class imbalance\cite{li2021autobalance,shu2019}.
$\mu_\vphi: \R\to (0, 1)$ is an MLP, and the inner loss function of the task is $\ell(\vtheta; \gD)=\sum_{(\vx, y)\in\gD}L(\vx, y; \vtheta)\cdot \mu_\vphi(L(\vx, y; \vtheta))$, where $L:\gD\to\R $ is cross-entropy loss.
$\vphi$ is trained on balanced validation data.

We train WideResNet 28-2 on imbalanced CIFAR-10 and CIFAR-100 \cite{cui2019class}, which simulate class imbalance.
Specifically, the imbalanced data with an imbalance factor of $f$ reduces the number of data in the $c$-th class to $\floor{\frac{1}{f^{c/C}}}$, where $C$ is the number of categories, \eg, 10 for CIFAR-10.
As the meta module, we adopt a two-layer MLP with a hidden size of 128, consisting of 385 parameters.
\Cref{tab:imbalanced} demonstrates that the glocal approach surpasses the local baseline, indicating its effectiveness to a wide range of problems.

\begin{table}[t]
	\centering
	\caption{Test accuracy of WideResNet 28-2 trained on imbalanced datasets.}
	\label{tab:imbalanced}
	\begin{tabular}{ccc}
	\toprule
	Dataset / Imbalance Factor & Local   &  \emph{Glocal} \\
	\cmidrule(lr){1-1} \cmidrule(lr){2-3}
	CIFAR-10 / 50            & 0.520    & 0.715    \\
    CIFAR-10 / 100           & 0.445    & 0.595    \\
    CIFAR-100 / 50           & 0.315    & 0.350     \\
	\bottomrule
	\end{tabular}
\end{table}

\subsection{Analysis}\label{subsec:analysis}

Below, we analyze the factors of the glocal hypergradient estimation using the task in \cref{sec:exp_optim} on the FMNIST dataset.

\begin{figure}[tb]
\begin{minipage}{0.6\linewidth}
    \centering
    \includegraphics[width=\linewidth]{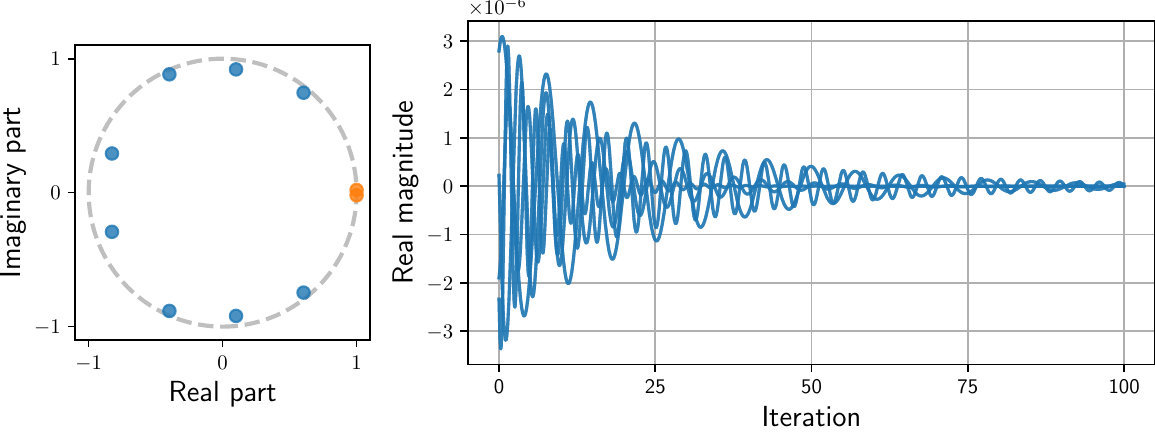}
\end{minipage}
\hfill
\begin{minipage}{0.32\linewidth}
    \centering
    \includegraphics[width=\linewidth]{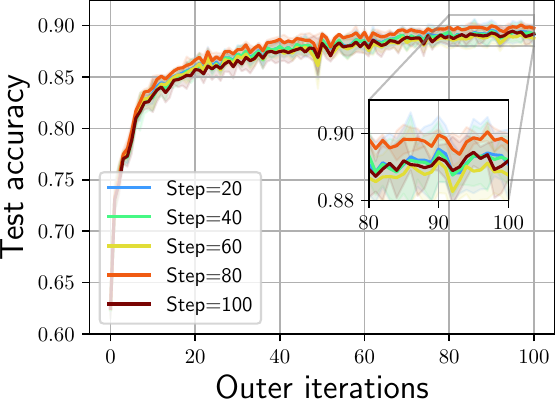}
\end{minipage}
    \caption{\textbf{Left}~The eigenvalues obtained by the Hankel DMD for a hypergradient trajectory. Eigenvalues nearly close to $1$ are highlighted in orange. \textbf{Middle}~The magnitude of modes of the estimated hypergradient corresponding to the learning rate $b_j\lambda_j^t u_j$ for $j: \lambda_j\neq 1$ over 100 iterations of $t$. \textbf{Right}~The comparison of validation performance of the proposed estimation with different configurations. LeNet is trained on FMNIST with an initial learning rate of $0.1$.}
    \label{fig:analysis}
\end{figure}

\begin{table}[h!]
    \centering
    \caption{Runtime comparison of optimizing optimizer hyperparameters with LeNet in second. ``w/o HPO'' indicates training without hyperparameter optimization process.}
    \label{tab:runtime}
    \begin{tabular}{cccc}
    \toprule
    w/o HPO    &    Local    &   Global    & \emph{Glocal}    \\
    \midrule
    11.4     &   58.2      &   2891.2    & 58.3             \\
    \bottomrule
    \end{tabular}
\end{table}

\begin{description}[leftmargin=0pt,topsep=0pt,partopsep=0pt]
	\item[Eigenvalues of DMD]
	In \cref{sec:method}, the existence of eigenpairs whose eigenvalues equal to one was assumed for efficient approximation of a global hypergradient.
    To see whether it holds in practice, the left panel of \cref{fig:analysis} shows the eigenvalues obtained by the Hankel DMD at $s=1$.
    We can observe two eigenvalues nearly close to $1$ (highlighted in orange).
    Additionally, the middle panel of \cref{fig:analysis} illustrates that modes with other eigenvalues decay rapidly.
    Because the magnitudes of the modes associated with the eigenvalue of 1 is an order of $10^{-3}$, other modes can be ignored, numerically supporting the validity of our assumption.
    
	\item[DMD Configurations]
     The right panel of \cref{fig:analysis} compares the proposed method with different configurations to see how the configurations of the Hankel DMD algorithm, specifically, the number of hypergradients ($\sigma$) and the number of stacks per column ($m$), affect the estimation.
    We can observe that it performs better when DMD uses the last 80 hypergradients out of 100.
    Although the estimation is expected to improve as more hypergradients are used, they may be unstable after the change of hyperparameters, which degenerates the quality of the prediction; discarding the first 20 hypergradients may avoid such an issue.
    
	\item[Runtime]
    \Cref{tab:runtime} compares runtime on a machine equipped with an AMD EPYC 7543P 32-Core CPU and an NVIDIA RTX 6000 Ada GPU with CUDA 12.3.
    Note that because of the limitation of \texttt{JAX} that eigen decomposition for asymmetric matrices on GPU is not yet supported, the computation of the adopted DMD algorithm is suboptimal.
    Nevertheless, the proposed method achieves significant speedup compared with the global baseline, revealing its empirical efficiency.
\end{description}

\section{Discussion and Conclusion}\label{sec:discussion_conclusion}

This paper introduced \emph{glocal} hypergradient estimation, which leverages the virtues of global and local hypergradients simultaneously.
To this end, we adopted the Koopman operator theory to approximate a global hypergradient from a trajectory of local hypergradients.
The numerical experiments demonstrated the validity of bi-level optimization using glocal hypergradients.

\begin{description}[leftmargin=0pt,topsep=0pt,partopsep=0pt]
    \item[Limitations] 
In this work, we have implicitly assumed that the meta criterion converges after enough iterations, and so do hypergradients.
This assumption may be strong when a certain minibatch drastically changes the value of the meta criterion, such as adversarial learning.
Luckily, we did not suffer from such a problem, probably because we focused our experiments only on supervised learning with sufficiently sized datasets and used full-batch validation data.
Introducing stochasticity by the Perron-Frobenius operator~\citep{korda2018convergence,hashimoto2020krylov}, an adjoint of the Koopman operator, would alleviate this limitation, which we leave for future research.
\end{description}

\bibliography{extracted}

\begin{thebibliography}{54}
\providecommand{\natexlab}[1]{#1}
\providecommand{\url}[1]{\texttt{#1}}
\expandafter\ifx\csname urlstyle\endcsname\relax
  \providecommand{\doi}[1]{doi: #1}\else
  \providecommand{\doi}{doi: \begingroup \urlstyle{rm}\Url}\fi

\bibitem[Arbabi \& Mezic(2017)Arbabi and Mezic]{arbabi2017ergodic}
Arbabi, H. and Mezic, I.
\newblock Ergodic theory, dynamic mode decomposition, and computation of spectral properties of the koopman operator.
\newblock \emph{SIAM Journal on Applied Dynamical Systems}, 16\penalty0 (4):\penalty0 2096--2126, 2017.

\bibitem[Baydin et~al.(2018)Baydin, Pearlmutter, Radul, and Siskind]{baydin2018}
Baydin, A.~G., Pearlmutter, B.~A., Radul, A.~A., and Siskind, J.~M.
\newblock Automatic differentiation in machine learning: a survey.
\newblock 18\penalty0 (153):\penalty0 1--43, 2018.

\bibitem[Bengio(2000)]{Bengio2000}
Bengio, Y.
\newblock {Gradient-based optimization of hyperparameters}.
\newblock 12\penalty0 (8):\penalty0 1889--1900, 2000.

\bibitem[Bergstra et~al.(2011)Bergstra, Bardenet, Bengio, and K{\'e}gl]{bergstra2011algorithms}
Bergstra, J., Bardenet, R., Bengio, Y., and K{\'e}gl, B.
\newblock Algorithms for hyper-parameter optimization.
\newblock \emph{NeurIPS}, 2011.

\bibitem[Bergstra et~al.(2013)Bergstra, Yamins, and Cox]{bergstra2013making}
Bergstra, J., Yamins, D., and Cox, D.
\newblock Making a science of model search: Hyperparameter optimization in hundreds of dimensions for vision architectures.
\newblock In \emph{ICML}, 2013.

\bibitem[Blondel et~al.(2021)Blondel, Berthet, Cuturi, Frostig, Hoyer, Llinares-L\'opez, Pedregosa, and Vert]{blondel2021}
Blondel, M., Berthet, Q., Cuturi, M., Frostig, R., Hoyer, S., Llinares-L\'opez, F., Pedregosa, F., and Vert, J.-P.
\newblock Efficient and modular implicit differentiation.
\newblock 2021.

\bibitem[Bradbury et~al.(2018)Bradbury, Frostig, Hawkins, Johnson, Leary, Maclaurin, Necula, Paszke, Vander{P}las, Wanderman-{M}ilne, and Zhang]{jax}
Bradbury, J., Frostig, R., Hawkins, P., Johnson, M.~J., Leary, C., Maclaurin, D., Necula, G., Paszke, A., Vander{P}las, J., Wanderman-{M}ilne, S., and Zhang, Q.
\newblock {JAX}: composable transformations of {P}ython+{N}um{P}y programs, 2018.
\newblock URL \url{http://github.com/google/jax}.

\bibitem[Brunton et~al.(2022)Brunton, Budi\v{s}i\'{c}, Kaiser, and Kutz]{brunton22_modernkoopman}
Brunton, S.~L., Budi\v{s}i\'{c}, M., Kaiser, E., and Kutz, J.~N.
\newblock Modern koopman theory for dynamical systems.
\newblock \emph{SIAM Review}, 64\penalty0 (2):\penalty0 229--340, 2022.

\bibitem[Choe et~al.(2023)Choe, Mehta, Ahn, Neiswanger, Xie, Strubell, and Xing]{choe2023making}
Choe, S.~K., Mehta, S.~V., Ahn, H., Neiswanger, W., Xie, P., Strubell, E., and Xing, E.
\newblock Making scalable meta learning practical.
\newblock In \emph{NeurIPS}, 2023.

\bibitem[Clanuwat et~al.(2018)Clanuwat, Bober-Irizar, Kitamoto, Lamb, Yamamoto, and Ha]{clanuwat2018deep}
Clanuwat, T., Bober-Irizar, M., Kitamoto, A., Lamb, A., Yamamoto, K., and Ha, D.
\newblock Deep learning for classical japanese literature.
\newblock \emph{arXiv}, 2018.

\bibitem[Cui et~al.(2019)Cui, Jia, Lin, Song, and Belongie]{cui2019class}
Cui, Y., Jia, M., Lin, T.-Y., Song, Y., and Belongie, S.
\newblock Class-balanced loss based on effective number of samples.
\newblock In \emph{CVPR}, 2019.

\bibitem[DeepMind et~al.(2020)DeepMind, Babuschkin, Baumli, Bell, Bhupatiraju, Bruce, Buchlovsky, Budden, Cai, Clark, Danihelka, Dedieu, Fantacci, Godwin, Jones, Hemsley, Hennigan, Hessel, Hou, Kapturowski, Keck, Kemaev, King, Kunesch, Martens, Merzic, Mikulik, Norman, Papamakarios, Quan, Ring, Ruiz, Sanchez, Sartran, Schneider, Sezener, Spencer, Srinivasan, Stanojevi\'{c}, Stokowiec, Wang, Zhou, and Viola]{deepmind2020jax}
DeepMind, Babuschkin, I., Baumli, K., Bell, A., Bhupatiraju, S., Bruce, J., Buchlovsky, P., Budden, D., Cai, T., Clark, A., Danihelka, I., Dedieu, A., Fantacci, C., Godwin, J., Jones, C., Hemsley, R., Hennigan, T., Hessel, M., Hou, S., Kapturowski, S., Keck, T., Kemaev, I., King, M., Kunesch, M., Martens, L., Merzic, H., Mikulik, V., Norman, T., Papamakarios, G., Quan, J., Ring, R., Ruiz, F., Sanchez, A., Sartran, L., Schneider, R., Sezener, E., Spencer, S., Srinivasan, S., Stanojevi\'{c}, M., Stokowiec, W., Wang, L., Zhou, G., and Viola, F.
\newblock The {D}eep{M}ind {JAX} {E}cosystem, 2020.
\newblock URL \url{http://github.com/deepmind}.

\bibitem[Deleu et~al.(2022)Deleu, Kanaa, Feng, Kerg, Bengio, Lajoie, and Bacon]{deleu2022continuoustime}
Deleu, T., Kanaa, D., Feng, L., Kerg, G., Bengio, Y., Lajoie, G., and Bacon, P.-L.
\newblock Continuous-time meta-learning with forward mode differentiation.
\newblock In \emph{ICLR}, 2022.
\newblock URL \url{https://openreview.net/forum?id=57PipS27Km}.

\bibitem[Dogra \& Redman(2020)Dogra and Redman]{dogra2020optimizing}
Dogra, A.~S. and Redman, W.
\newblock Optimizing neural networks via koopman operator theory.
\newblock In \emph{NeurIPS}, volume~33, pp.\  2087--2097, 2020.

\bibitem[Domke(2012)]{Domke2012}
Domke, J.
\newblock {Generic methods for optimization-based modeling}.
\newblock 22:\penalty0 318--326, 2012.

\bibitem[Finn et~al.(2017)Finn, Abbeel, and Levine]{Finn2017b}
Finn, C., Abbeel, P., and Levine, S.
\newblock {Model-Agnostic Meta-Learning for Fast Adaptation of Deep Networks}.
\newblock In \emph{ICML}, 2017.

\bibitem[Franceschi et~al.(2017)Franceschi, Donini, Frasconi, and Pontil]{franceschi17a}
Franceschi, L., Donini, M., Frasconi, P., and Pontil, M.
\newblock Forward and reverse gradient-based hyperparameter optimization.
\newblock In \emph{ICML}, 2017.

\bibitem[Grefenstette et~al.(2019)Grefenstette, Amos, Yarats, Htut, Molchanov, Meier, Kiela, Cho, and Chintala]{grefenstette2019}
Grefenstette, E., Amos, B., Yarats, D., Htut, P.~M., Molchanov, A., Meier, F., Kiela, D., Cho, K., and Chintala, S.
\newblock Generalized inner loop meta-learning.
\newblock 2019.

\bibitem[Hashimoto et~al.(2020)Hashimoto, Ishikawa, Ikeda, Matsuo, and Kawahara]{hashimoto2020krylov}
Hashimoto, Y., Ishikawa, I., Ikeda, M., Matsuo, Y., and Kawahara, Y.
\newblock Krylov subspace method for nonlinear dynamical systems with random noise.
\newblock \emph{The Journal of Machine Learning Research}, 21\penalty0 (1):\penalty0 6954--6982, 2020.

\bibitem[Hashimoto et~al.(2024)Hashimoto, Sonoda, Ishikawa, Nitanda, and Suzuki]{hashimoto2024koopmanbased}
Hashimoto, Y., Sonoda, S., Ishikawa, I., Nitanda, A., and Suzuki, T.
\newblock Koopman-based generalization bound: New aspect for full-rank weights.
\newblock In \emph{ICLR}, 2024.
\newblock URL \url{https://openreview.net/forum?id=JN7TcCm9LF}.

\bibitem[Hataya \& Yamada(2023)Hataya and Yamada]{hataya2023nystrom}
Hataya, R. and Yamada, M.
\newblock Nystr{\"o}m method for accurate and scalable implicit differentiation.
\newblock In \emph{AISTATS}, pp.\  4643--4654. PMLR, 2023.

\bibitem[Hataya et~al.(2022)Hataya, Zdenek, Yoshizoe, and Nakayama]{Hataya2022}
Hataya, R., Zdenek, J., Yoshizoe, K., and Nakayama, H.
\newblock {Meta Approach for Data Augmentation Optimization}.
\newblock In \emph{WACV}, 2022.

\bibitem[Hospedales et~al.(2021)Hospedales, Antoniou, Micaelli, and Storkey]{hospedales2021meta}
Hospedales, T., Antoniou, A., Micaelli, P., and Storkey, A.
\newblock Meta-learning in neural networks: A survey.
\newblock 44\penalty0 (9):\penalty0 5149--5169, 2021.

\bibitem[Hutter et~al.(2019)Hutter, Kotthoff, and Vanschoren]{hutter19}
Hutter, F., Kotthoff, L., and Vanschoren, J. (eds.).
\newblock \emph{{Automatic Machine Learning: Methods, Systems, Challenges}}.
\newblock Springer, 2019.

\bibitem[Kidger \& Garcia(2021)Kidger and Garcia]{kidger2021equinox}
Kidger, P. and Garcia, C.
\newblock {E}quinox: neural networks in {JAX} via callable {P}y{T}rees and filtered transformations.
\newblock \emph{Differentiable Programming workshop at Neural Information Processing Systems 2021}, 2021.

\bibitem[Kingma \& Ba(2015)Kingma and Ba]{Kingma2015}
Kingma, D.~P. and Ba, J.~L.
\newblock {Adam: a Method for Stochastic Optimization}.
\newblock In \emph{ICLR}, 2015.

\bibitem[Konishi \& Kawahara(2023)Konishi and Kawahara]{konishi2023stable}
Konishi, T. and Kawahara, Y.
\newblock Stable invariant models via koopman spectra.
\newblock \emph{Neural Networks}, 165:\penalty0 393--405, 2023.

\bibitem[Koopman(1931)]{koopman1931hamiltonian}
Koopman, B.~O.
\newblock Hamiltonian systems and transformation in hilbert space.
\newblock \emph{Proceedings of the National Academy of Sciences}, 17\penalty0 (5):\penalty0 315--318, 1931.

\bibitem[Korda \& Mezi{\'c}(2018)Korda and Mezi{\'c}]{korda2018convergence}
Korda, M. and Mezi{\'c}, I.
\newblock On convergence of extended dynamic mode decomposition to the koopman operator.
\newblock \emph{Journal of Nonlinear Science}, 28:\penalty0 687--710, 2018.

\bibitem[Krizhevsky et~al.(2009)Krizhevsky, Hinton, et~al.]{krizhevsky2009learning}
Krizhevsky, A., Hinton, G., et~al.
\newblock Learning multiple layers of features from tiny images.
\newblock 2009.

\bibitem[Kutz et~al.(2016)Kutz, Brunton, Brunton, and Proctor]{kutz16_dmd}
Kutz, J.~N., Brunton, S.~L., Brunton, B.~W., and Proctor, J.~L.
\newblock \emph{Dynamic Mode Decomposition}.
\newblock Society for Industrial and Applied Mathematics, 2016.

\bibitem[Larsen et~al.(1996)Larsen, Hansen, Svarer, and Ohlsson]{Larsen1996}
Larsen, J., Hansen, L.~K., Svarer, C., and Ohlsson, M.
\newblock Design and regularization of neural networks: the optimal use of a validation set.
\newblock In \emph{Neural Networks for Signal Processing VI. Proceedings of the 1996 IEEE Signal Processing Society Workshop}, 1996.

\bibitem[Le~Cun et~al.(1998)Le~Cun, Bottou, Bengio, and Haffner]{lecun2010mnist}
Le~Cun, Y., Bottou, L., Bengio, Y., and Haffner, P.
\newblock Gradient-based learning applied to document recognition.
\newblock 86\penalty0 (11):\penalty0 2278--2324, 1998.

\bibitem[LeCun et~al.(2012)LeCun, Bottou, Orr, and M\"uller]{lecun2012efficient}
LeCun, Y., Bottou, L., Orr, G., and M\"uller, K.-R.
\newblock \emph{Efficient BackProp}, pp.\  9--48.
\newblock Springer Berlin Heidelberg, 2012.

\bibitem[Li et~al.(2021)Li, Zhang, Thrampoulidis, Chen, and Oymak]{li2021autobalance}
Li, M., Zhang, X., Thrampoulidis, C., Chen, J., and Oymak, S.
\newblock {AutoBalance: Optimized Loss Functions for Imbalanced Data}.
\newblock In \emph{NeurIPS}, 2021.

\bibitem[Liu et~al.(2019)Liu, Simonyan, and Yang]{liu2018darts}
Liu, H., Simonyan, K., and Yang, Y.
\newblock {DARTS}: Differentiable architecture search.
\newblock In \emph{ICLR}, 2019.

\bibitem[Lorraine et~al.(2020)Lorraine, Vicol, and Duvenaud]{lorraine20a}
Lorraine, J., Vicol, P., and Duvenaud, D.
\newblock Optimizing millions of hyperparameters by implicit differentiation.
\newblock In \emph{AISTATS}, 2020.

\bibitem[Lu \& Tartakovsky(2020)Lu and Tartakovsky]{lu2020prediction}
Lu, H. and Tartakovsky, D.~M.
\newblock Prediction accuracy of dynamic mode decomposition.
\newblock \emph{SIAM Journal on Scientific Computing}, 42\penalty0 (3):\penalty0 A1639--A1662, 2020.

\bibitem[Luketina et~al.(2016)Luketina, Berglund, Greff, and Raiko]{luketina2016scalable}
Luketina, J., Berglund, M., Greff, K., and Raiko, T.
\newblock Scalable gradient-based tuning of continuous regularization hyperparameters.
\newblock In \emph{ICML}, 2016.

\bibitem[Maclaurin et~al.(2015)Maclaurin, Duvenaud, and Adams]{Maclaurin2015}
Maclaurin, D., Duvenaud, D., and Adams, R.~P.
\newblock {Gradient-based Hyperparameter Optimization through Reversible Learning}.
\newblock In \emph{ICML}, 2015.

\bibitem[Manojlovi{\'c} et~al.(2020)Manojlovi{\'c}, Fonoberova, Mohr, Andrej{\v{c}}uk, Drma{\v{c}}, Kevrekidis, and Mezi{\'c}]{manojlovic2020applications}
Manojlovi{\'c}, I., Fonoberova, M., Mohr, R., Andrej{\v{c}}uk, A., Drma{\v{c}}, Z., Kevrekidis, Y., and Mezi{\'c}, I.
\newblock Applications of koopman mode analysis to neural networks.
\newblock \emph{arXiv}, 2020.

\bibitem[Mezi{\'c}(2005)]{mezic2005spectral}
Mezi{\'c}, I.
\newblock Spectral properties of dynamical systems, model reduction and decompositions.
\newblock \emph{Nonlinear Dynamics}, 41:\penalty0 309--325, 2005.

\bibitem[Micaelli \& Storkey(2021)Micaelli and Storkey]{micaelli2021gradientbased}
Micaelli, P. and Storkey, A.
\newblock Gradient-based hyperparameter optimization over long horizons.
\newblock In \emph{NeurIPS}, 2021.

\bibitem[Naiman \& Azencot(2023)Naiman and Azencot]{naiman2023operator}
Naiman, I. and Azencot, O.
\newblock An operator theoretic approach for analyzing sequence neural networks.
\newblock In \emph{AAAI}, volume~37, pp.\  9268--9276, 2023.

\bibitem[Pedregosa(2016)]{Pedregosa2016a}
Pedregosa, F.
\newblock {Hyperparameter optimization with approximate gradient}.
\newblock In \emph{ICML}, 2016.

\bibitem[Rajeswaran et~al.(2019)Rajeswaran, Finn, Kakade, and Levine]{Rajeswaran2019}
Rajeswaran, A., Finn, C., Kakade, S., and Levine, S.
\newblock {Meta-Learning with Implicit Gradients}.
\newblock In \emph{NeurIPS}, 2019.

\bibitem[Redman et~al.(2022)Redman, Fonoberova, Mohr, Kevrekidis, and Mezi{\'c}]{redman2022an}
Redman, W.~T., Fonoberova, M., Mohr, R., Kevrekidis, Y., and Mezi{\'c}, I.
\newblock An operator theoretic view on pruning deep neural networks.
\newblock In \emph{ICLR}, 2022.

\bibitem[Sakamoto et~al.(2023)Sakamoto, Ishibashi, Sato, Shirakawa, Akimoto, and Hino]{sakamoto2023atnas}
Sakamoto, K., Ishibashi, H., Sato, R., Shirakawa, S., Akimoto, Y., and Hino, H.
\newblock Atnas: Automatic termination for neural architecture search.
\newblock \emph{Neural Networks}, 166:\penalty0 446--458, 2023.

\bibitem[Shu et~al.(2019)Shu, Xie, Yi, Zhao, Zhou, Xu, and Meng]{shu2019}
Shu, J., Xie, Q., Yi, L., Zhao, Q., Zhou, S., Xu, Z., and Meng, D.
\newblock Meta-weight-net: Learning an explicit mapping for sample weighting.
\newblock In \emph{NeurIPS}, 2019.

\bibitem[Wu \& He(2018)Wu and He]{wu2018group}
Wu, Y. and He, K.
\newblock Group normalization.
\newblock In \emph{ECCV}, 2018.

\bibitem[Wu et~al.(2018)Wu, Ren, Liao, and Grosse.]{wu2018understanding}
Wu, Y., Ren, M., Liao, R., and Grosse., R.
\newblock Understanding short-horizon bias in stochastic meta-optimization.
\newblock In \emph{ICLR}, 2018.

\bibitem[Xiao et~al.(2017)Xiao, Rasul, and Vollgraf]{xiao2017fashion}
Xiao, H., Rasul, K., and Vollgraf, R.
\newblock Fashion-mnist: a novel image dataset for benchmarking machine learning algorithms.
\newblock \emph{arXiv}, 2017.

\bibitem[Zagoruyko \& Komodakis(2016)Zagoruyko and Komodakis]{Zagoruyko2016}
Zagoruyko, S. and Komodakis, N.
\newblock {Wide Residual Networks}.
\newblock In \emph{BMVC}, 2016.

\bibitem[Zhang et~al.(2021)Zhang, Su, Pan, Chang, Abbasnejad, and Haffari]{zhang21s}
Zhang, M., Su, S.~W., Pan, S., Chang, X., Abbasnejad, E.~M., and Haffari, R.
\newblock idarts: Differentiable architecture search with stochastic implicit gradients.
\newblock In \emph{ICML}, 2021.

\end{thebibliography}
\bibliographystyle{icml2024}

\appendix

\counterwithin{figure}{section}
\counterwithin{table}{section}

\section{Additional Discussion on \cref{thm:error}}\label{ap:thoery}

The first half of the proof of \cref{thm:error} depends on the theorem 3.6 of \cite{lu2020prediction}, which shows \cref{eq:lu_pred} without equality holds if $\bar{\mK}$'s spectral radius $\rho(\bar{\mK})<1$.
This condition can be relaxed to $\rho(\bar{\mK})\leq 1$, and then \cref{eq:lu_pred} holds.
$\varepsilon_\tau$ in \cref{thm:error} can be given as
\begin{equation}
    \varepsilon_\tau = \left\{ c_\tau\left(\norm{\vh_1}_2^2 + \sum_{t=1}^{\tau}\norm{\vf_t}_2^2\right) \right\}^{1/2},
\end{equation}
where $\vf_t=\bar{\mK}\vg(\vh_t)-\vg(\vh_{t+1})$ and $c_\tau\geq \norm{\bar{\mK}-\mK}_2^2$ depends on the number of hypergradients.

\section{Detailed Experimental Settings}\label{ap:experimental_settings}

Throughout the experiments, the batch size of training data was set to 128.

\subsection{Optimizing Optimizer Hyperparameters}
\begin{description}[leftmargin=0pt,topsep=0pt,partopsep=0pt]
    \item[LeNet] We set the number of filters in each convolutional layer to 16 and the dimension of the following linear layers to 32. The leaky ReLU is used as its activation.
    \item[WideResNet] We modified the original WideResNet 28-2 in \cite{Zagoruyko2016} as follows: replacing the batch normalization with group normalization~\cite{wu2018group} and adopting the leaky ReLU as the activation function.
    \item[Validation data] We separated 10\% of the original training data as validation data.
\end{description}

\subsection{Data reweighting}
\begin{description}[leftmargin=0pt,topsep=0pt,partopsep=0pt]
    \item[WideResNet] We modified the original WideResNet 28-2 in \cite{Zagoruyko2016} as follows: replacing the batch normalization with group normalization and adopting the leaky ReLU as the activation function.
    The model was trained with an inner optimizer of SGD with a learning rate of 0.01, momentum of 0.9, and weight decay rate of 0.
    \item[Valiation data] We separated 1000 data points from the original training dataset to construct a validation set.
\end{description}

\section{Additional Experimental Results}\label{ap:additional_results}

\Cref{fig:kmnist,fig:mnist} present the test accuracy curves and the transition of SGD's hyperparameters of KMNIST and MNIST.
As the results presented in the main text in \cref{sec:exp_optim}, the proposed glocal approach yields similar behaviors as the global one, while maintaining the efficiency of the local approach.
As can be seen from \cref{fig:kmnist,fig:fmnist_01}, the global method often fails because of loss explosion, as discussed in the literature~\citep{micaelli2021gradientbased}.
Similarly, \cref{fig:cifar100} shows the results on the CIFAR-100 dataset.

\begin{figure}[h!]
	\centering
	\includegraphics[width=\linewidth]{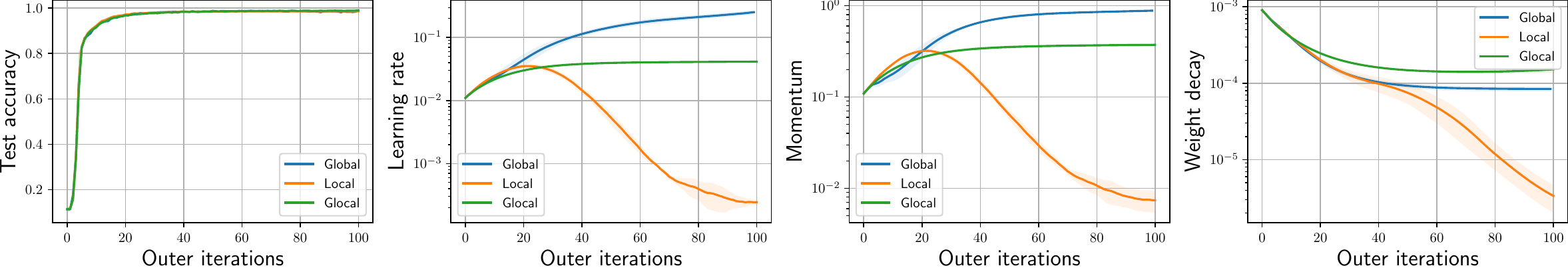}
	\caption{The transition of the SGD's hyperparameters and test accuracy curves of LeNet on MNIST with an initial learning rate of 0.01.}
	\label{fig:mnist}
\end{figure}

\begin{figure}[h!]
	\centering
	\includegraphics[width=\linewidth]{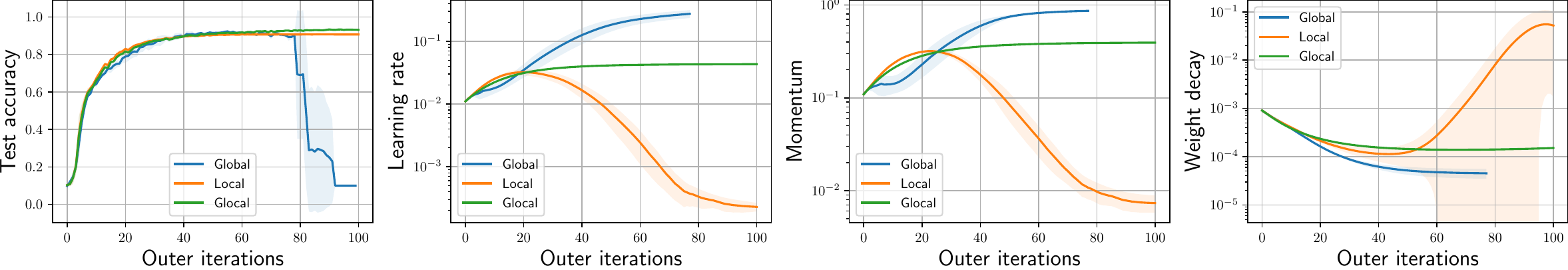}
	\caption{The transition of the SGD's hyperparameters and test accuracy curves of LeNet on KMNIST with an initial learning rate of 0.01.}
	\label{fig:kmnist}
\end{figure}

\begin{figure}[h!]
	\centering
	\includegraphics[width=\linewidth]{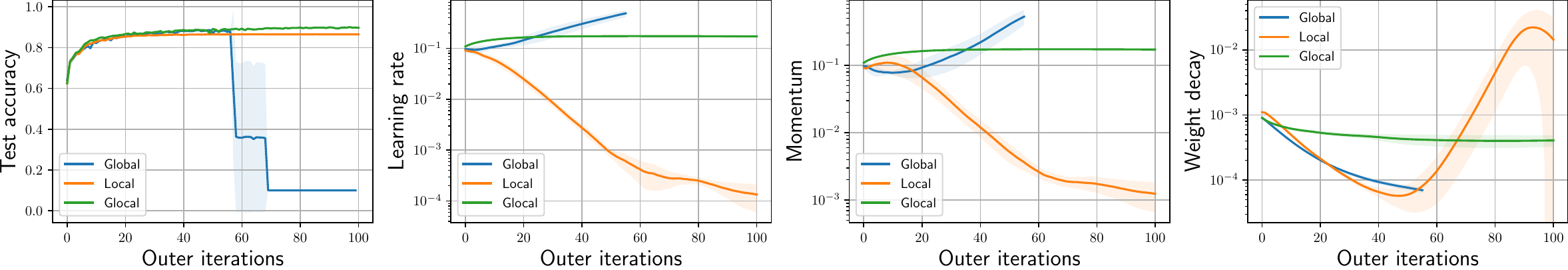}
	\caption{The transition of the SGD's hyperparameters and test accuracy curves of LeNet on FMNIST with an initial learning rate of 0.1.}
	\label{fig:fmnist_01}
\end{figure}

\begin{figure}[h!]
	\centering
	\includegraphics[width=\linewidth]{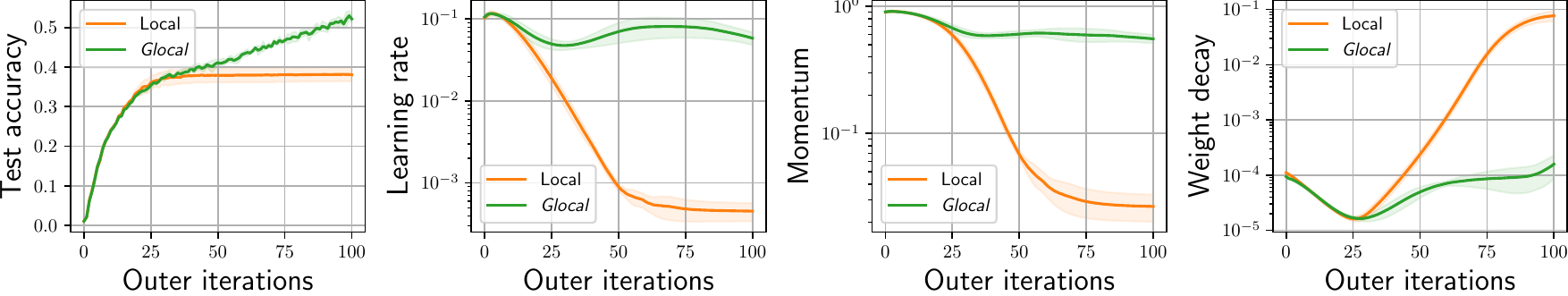}
	\caption{The transition of the SGD's hyperparameters and test accuracy curves of WideResNet 28-2 on CIFAR-100.}
	\label{fig:cifar100}
\end{figure}

\end{document}